\documentclass[runningheads]{llncs}

\usepackage{graphicx}
\usepackage{booktabs}
\usepackage{CJK}
\usepackage{booktabs}
\usepackage{multirow}

\usepackage{tabularx}
\usepackage{makecell}
\usepackage{threeparttable}
\usepackage{siunitx}
\usepackage[accsupp]{axessibility}  

\usepackage{amsmath}
\usepackage{amssymb}
\usepackage{bm}
\newtheorem{assumption}{Assumption}
\usepackage{hyperref}
\usepackage{cleveref}
\usepackage{orcidlink}

\begin{document}

\title{Listening makes Vision Clear for VLMs}


\author{Yiyang Chen\inst{1} \and
Yixin Tan\inst{1} \and
Binrui Shen\inst{1}}
\authorrunning{Chen. et al.}

\institute{Beijing Normal University}
\maketitle

\begin{abstract}
Recent work typically assesses vision–language consistency using attention distributions of answer-side tokens. However, we observe that highest attention regions are not always consistent with the intended semantic token. This probably stems from decoding drift, where language priors from previously generated answer tokens accumulate and mismatch with visual attention.
Besides the priors from previous answer tokens, we find that structural tokens (e.g., modality boundary markers) may encompass the entire context and generate high attention to areas unrelated to the target.
To avoid these distortions and provide consistency evaluation for large VLMs, we adopt prompt-side semantics and propose \textbf{P}rompt-\textbf{V}ision \textbf{T}oken \textbf{A}ctivation \textbf{M}ap (PV-TAM).
PV-TAM further incorporates a filter to remove systematic bias induced by modality boundary markers (e.g., \texttt{<vision\_start>}, \texttt{<vision\_end>}). 
Unlike traditional methods that evaluate overlap solely through masks while ignoring activation intensity, our metrics leverage the peak distribution of attention to measure the alignment between prompts and visual regions. 
In experiments, PV-TAM consistently improves both attention-based and IoU-style localization metrics over answer-side baselines on various datasets.

\end{abstract}

\section{Introduction}
\label{sec:intro}

Vision–language models (VLMs) have achieved impressive multimodal capabilities and are increasingly used in tasks that require reliable semantic alignment between text and vision.
Language–vision semantic consistency is crucial for ensuring that VLM predictions are grounded in the right visual area rather than language priors, enabling reliable interpretation, evaluation, and visual grounding \cite{selvaraju2019taking,zhang2022glipv2,kang2025your,zhang2026adavboost}. 
In this work, we use \textbf{consistency} to assess the degree of alignment between a visual region and its corresponding semantic token. 
Most existing approaches estimate such alignment from attention maps of answer tokens during decoding\cite{li2025tam,abnar2020quantifying}.

However, we notice that aligning language and vision solely on the answer side may not accurately reflect consistency. In practice, the highest attention regions are not always consistent with the intended semantic token \cite{wiegreffe-pinter-2019-attention,serrano2019attention}. Take \cref{fig:invalidcase} as an example, given a ''duck neck'' query, attention fails to localize the neck and instead highlights the duck wing region. This misalignment might lie in decoding drift, where language priors from previously generated answer tokens accumulate in the growing context and gradually dominate the prompt–vision relation. 
\begin{figure}[htbp]
    \centering    \includegraphics[width=1.\linewidth]{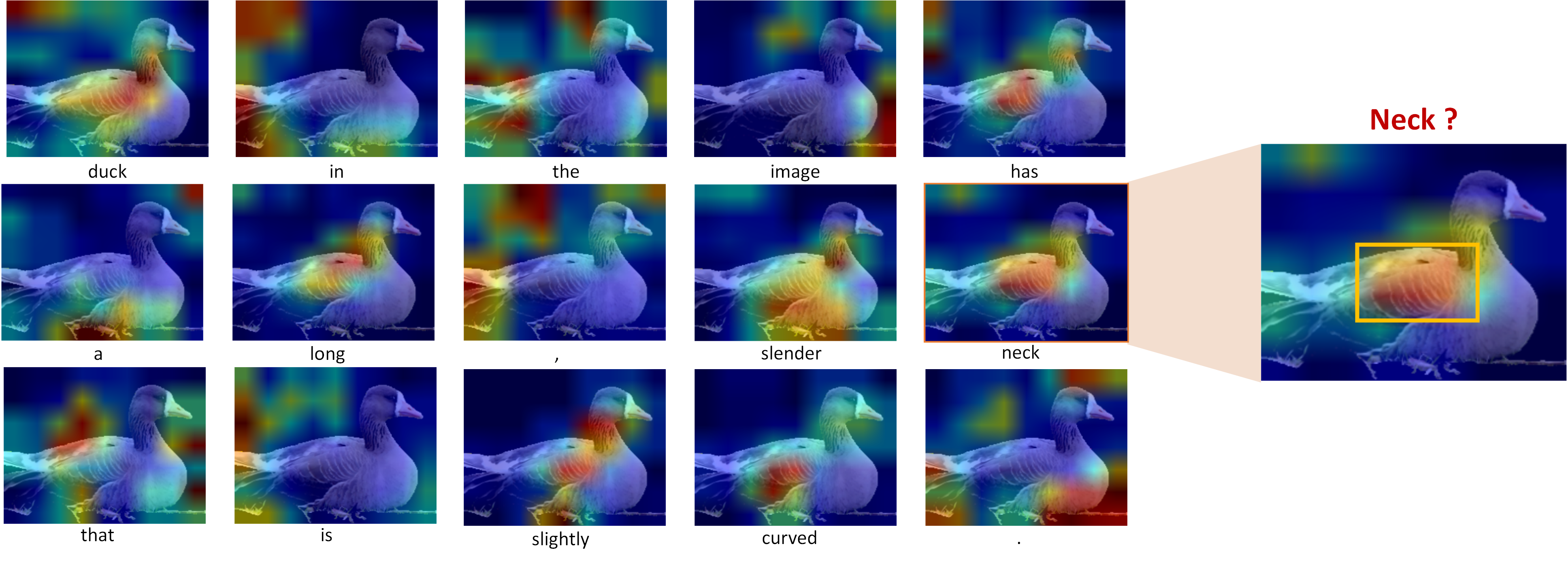}
    \caption{A representative example showing the sensitivity of TAM to preceding context \cite{li2025tam}. The more red indicates higher attention. But  highest attention region is not consistent with the intended semantic area \texttt{neck}. We can see that the input text is a sentence where prior words actually hurt the vision-language alignment.}
\label{fig:invalidcase}
\end{figure}
To avoid this issue, we shift the perspective from answer-side generation to prompt-side semantics, where the queried tokens are fixed inputs.
From a probabilistic perspective, the generated answer content is inherently influenced by prior context \cite{bengio2015scheduled,schmidt2019generalization}. The semantic information of the current output is not independent of context, potentially introducing drift phenomena. Conversely, tokens on the fixed prompt side provide more stable token-to-vision grounding.
Based on these insights, we propose a prompt-based attention map framework that extracts and denoises token-to-vision activation maps from prompt tokens by suppressing structural-token attention, and evaluates localization precision with two metrics: TDR and TGR.
Specifically, we make three contributions:
\begin{itemize}
    \item \textbf{Prompt-Vision Token Activation Map.} We propose a prompt-side framework to address language-vision alignment without relying on answer-side generations to reduce drift and contextual entanglement.
    \item \textbf{Structure-Token Denoising.}
    We identify structural tokens surrounding the target semantic span and show that their activation maps mainly capture structural bias rather than semantics. We justify removing structural-token attention to obtain a denoised token activation map for the target semantics.
    \item \textbf{Alignment-Oriented Evaluation.} 
We designed two localization metrics: TDR to quantify global coverage and TGR to quantify coverage within target semantic regions. These metrics provide a novel perspective for evaluating the accuracy of prompt-to-vision alignment.
\end{itemize}
Together, these components form a framework that turns prompt tokens into a practical diagnostic tool for VLMs in semantic alignment analysis. 
In experiments, our method outperforms or matches prior interpretability baselines (e.g., CAM\cite{jiang2021layercam}, CP-LRP\cite{ali2022xai}, Attention Rollout\cite{abnar2020quantifying}, TAM\cite{li2025tam}) on TGR/TDR and Min-Dist, demonstrating consistently improved prompt-to-vision alignment.

\section{Related Works}
\subsection{Vision-Language Models}
Recent VLMs typically relied on visual and language inputs. After tokenizing and concatenating them, the token sequence was fed into cross-modal Transformers to learn language-visual alignment for tasks such as VQA, retrieval, and referring expression comprehension.
UNITER and OSCAR explicitly modeled token--region matching in multi-task pretraining, and further strengthened cross-modal binding with object tags or alignment objectives~\cite{chen2020uniter,li2020oscar}.
VinVL pushed this line forward by improving the \emph{visual region representations themselves}, demonstrating that stronger detection features lead to substantial gains in downstream cross-modal understanding and localization~\cite{zhang2021vinvl}.

Localization capability has been incorporated more directly into pretraining.
MDETR modulates DETR with text queries, enabling end-to-end text-conditioned detection~\cite{kamath2021mdetr}.
RegionCLIP and GLIP introduce region-level contrastive alignment or joint objectives that combine detection with phrase alignment, improving open-vocabulary transferability~\cite{zhong2022regionclip,li2022glip,zhang2022glipv2,li2022elevater}.
Meanwhile, as ViT-style patch tokens become the dominant visual representation, cross-modal interaction increasingly shifts to the token level, providing a structural basis for analyzing token-wise visual dependence inside modern VLMs.
At the evaluation level, phrase grounding (PG) and referring expression grounding (RECG) still largely rely on IoU-style criteria\cite{chen2025revisiting,kazemzadeh2014referitgame}; datasets such as Flickr30k Entities provide stable phrase--box annotations for entity-to-region evaluation~\cite{plummer2015flickr30kentities}.
However, these evaluations emphasize final mask quality and do not directly capture the consistency of language and vision in VLMs.

\subsection{Attention Interpretability and Evaluation}
A broad literature visualizes attention weights or gradient-based attributions to explain where a model attends. Typical examples include Grad-CAM for gradient-based localization and SmoothGrad for noise-reduced saliency \cite{selvaraju2017grad,smilkov2017smoothgrad}.
However, attention is not necessarily equivalent to causal contribution\cite{wiegreffe-pinter-2019-attention}. Attention weights can be redistributed without significantly changing the output, which may yield plausible but weakly faithful explanations.
In multimodal models, gradient-based localization and perturbation-style validation have been used to provide more testable explanations\cite{Fong_2019_ICCV}, while quantitative protocols for saliency evaluation can be sensitive to implementation choices and exhibit instability\cite{petsiuk2021black}.
Another line of work exploits Transformer structure, like selecting several heads, to obtain more stable attention patterns\cite{kang2025your}. But stability alone does not guarantee that a map faithfully reflects a specific token's visual evidence-especially under autoregressive generation\cite{Chefer_2021_CVPR}.

\subsection{Priors, Hallucination, and Noisy Supervision in VLMs}
In generative VLMs, the output follows an autoregressive mechanism. Each newly generated token is influenced not only by the image but also by semantic priors and accumulated contextual states. Prior studies in VQA suggest that the model may be biased towards utilizing language information rather than fully leveraging multimodal information. As a result, it can still deliver high-confidence supervised task results even with insufficient visual information, while unsupervised generative tasks such as image captioning often exhibit hallucination, generating entities that do not exist in the image\cite{Goyal_2017_CVPR,agrawal2018don,rohrbach-etal-2018-object}.
During model training and prediction, the language-visual mapping built based on generated labels may be amplified due to the dominance of language priors or distorted due to supervision alignment bias\cite{sharma-etal-2018-conceptual,jia2021scaling}.
The problems arising from prior knowledge, hallucination, and noise necessitate better evaluation criteria to assess true language-visual relationship.
\section{Methods}
\subsection{Problem Formulation}
\label{subsec:problem}
We consider Transformer-based VLMs. Given an image $\mathcal{I}$, a vision encoder produces a sequence of visual tokens $\mathcal{T}_v=\{t_i^v\}_{i=1}^{N_v}$, $t_i^v\in\mathbb{R}^d$.
A text prompt $\mathcal{P}$ is tokenized and embedded into prompt tokens $\mathcal{T}_p=\{t_j^p\}_{j=1}^{N_p}$.
At decoding step $k$, the previously generated answer tokens are $\mathcal{T}_a^{<k}=\{t_m^a\}_{m=1}^{k-1}$, and the full token sequence is
$\mathcal{T}_{\le k}=\mathcal{T}_v \oplus \mathcal{T}_p \oplus \mathcal{T}_a^{<k}$.
In layer $l$ and head $h$, let $Q^{(l,h)}=W_Q^{(l,h)}\mathcal{T}_{\le k}$ and $K^{(l,h)}=W_K^{(l,h)}\mathcal{T}_{\le k}$, where $W_Q^{(l,h)}, W_K^{(l,h)} \in \mathbb{R}^{d_k \times d}$ denote the query and key projection matrices that map the token sequence $\mathcal{T}_{\le k}$ to the $d_k$-dimensional latent space of the $h$-th attention head in the $l$-th layer.
The causal self-attention weights are
\begin{equation}
A^{(l,h)}=\mathrm{Softmax}\!\left(\frac{Q^{(l,h)}{K^{(l,h)}}^\top}{\sqrt{d_k}}+M\right)\in\mathbb{R}^{N\times N},
\end{equation}
where $N=N_v+N_p+k-1$ and $M$ is the causal mask whose entries are $0$ for valid positions and $-\infty$ for masked future tokens\cite{vaswani2017attention}. 
For a text token $t$, we extract its attention toward visual tokens as a vector
\begin{equation}
\mathbf{a}^{(l,h)}(t)=\left[A^{(l,h)}_{t,t_1^v},\ldots,A^{(l,h)}_{t,t_{N_v}^v}\right]^\top\in\mathbb{R}^{N_v}.
\end{equation}
Assuming the visual tokens form a flattened 2D grid ($N_v = H'W'$),
we reshape the attention vector $\mathbf{a}^{(l,h)}(t) \in
\mathbb{R}^{N_v}$ into a spatial attention map in
$\mathbb{R}^{H' \times W'}$. The map is then bilinearly upsampled to
the original image resolution to produce a dense token-to-visual
activation map.

\subsection{Explain Inconsistency through Prior Context Interference}
\label{sec:context_interference}

For decoding step $j$, an autoregressive VLM predicts a next-token distribution
\begin{equation}
p_\theta(t_j \mid \mathcal{I}, \mathcal{T}_{<j}),
\label{eq:next_token_dist}
\end{equation}
where $\mathcal{T}_{<j}=\mathcal{T}_v \oplus \mathcal{T}_p \oplus \mathcal{T}_a^{<j}$ is the full prefix context.
To localize $t_j$, we extract a token-to-vision activation map
$A_{j,\cdot}\in\mathbb{R}^{N_v}$ (definition and aggregation are given in \cref{subsec:problem}).
Ideally, the spatial evidence supporting token $t_j$ should reflect the image $\mathcal{I}$ and the semantics of $t_j$,
and thus be (approximately) invariant to irrelevant variations in the prefix context. Concretely, we desire
\begin{equation}
A_{j,\cdot} \approx g_\theta(\mathcal{I}, t_j),
\label{eq:ideal_dep}
\end{equation}
while in practice, because the model is causal and autoregressive,
\begin{equation}
A_{j,\cdot} = g_\theta(\mathcal{I}, \mathcal{T}_{\le j}),
\qquad \mathcal{T}_{\le j}=\mathcal{T}_v \oplus \mathcal{T}_p \oplus \mathcal{T}_a^{<j} \oplus t_j,
\label{eq:actual_dep}
\end{equation}
so $A_{j,\cdot}$ is generally \emph{prefix-conditioned}.

We empirically attribute the mismatch between \cref{eq:ideal_dep} and \cref{eq:actual_dep}
to three confounding pathways:
(i) \textbf{prompt-induced priors} from $\mathcal{T}_p$,
(ii) \textbf{generated-answer priors} from $\mathcal{T}_a^{<j}$ through residual accumulation, and
(iii) \textbf{language-model priors} dominating when visual evidence is ambiguous. Therefore, 
the next-token distribution \cref{eq:next_token_dist} becomes less sensitive to $\mathcal{I}$ and more driven by textual context.

\begin{proposition}[Autoregressive Context Contamination]
\label{prop:context}
Consider a causal self-attention Transformer in which, for at least one head used by the attribution extractor,
the attention from position $j$ to some previous position $j'<j$ can be non-zero.
Then the representation at step $j$ and any activation map derived from it depend on the previous context. Let $\widetilde{\mathcal{T}}_{<j}$ denote an alternative prefix context obtained by changing at least one token in $\mathcal{T}_{<j}$ while keeping the image $\mathcal{I}$ fixed, where $\widetilde{A}_{j,\cdot}$ is the activation map produced under $\widetilde{\mathcal{T}}_{<j}$.
In particular, there exist prefixes $\mathcal{T}_{<j}$ and $\widetilde{\mathcal{T}}_{<j}$ such that
\begin{equation}
(\mathcal{I}, t_j)\ \text{fixed},\ \mathcal{T}_{<j}\neq \widetilde{\mathcal{T}}_{<j}
\quad\Rightarrow\quad
A_{j,\cdot} \neq \widetilde{A}_{j,\cdot}.
\label{eq:context_ineq}
\end{equation}
\end{proposition}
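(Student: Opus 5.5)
The plan is to prove the existence claim by an explicit construction. It suffices to exhibit one pair of prefixes that differ in a single token and yield different activation maps. I will work in the layer $l$ and head $h$ that the hypothesis supplies: there, row $j$ of $A^{(l,h)}$ places non-zero weight on some earlier position $j'<j$. The key observation is that softmax normalisation couples all entries of a row. Changing the logit at one non-visual position therefore rescales the weights on every visual position $t_i^v$, and this changes $\mathbf{a}^{(l,h)}(t_j)$.

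Concretely, row $j$ of $A^{(l,h)}$ has entries $A_{j,r}=\exp(s_{j,r})/Z_j$, where $s_{j,r}=q_j^\top k_r/\sqrt{d_k}$ and $Z_j=\sum_{r\le j}\exp(s_{j,r})$.

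\textbf{Step 1 (choose the perturbation).} I will take $j'$ to be a text position, either a prompt or an answer token, with $N_v<j'<j$. I then replace $t_{j'}$ by a token $\tilde t_{j'}$ whose key changes the logit, $\tilde s_{j,j'}\neq s_{j,j'}$.

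\textbf{Step 2 (isolate the effect).} To keep the comparison clean, I first treat the case where $l$ is the first layer. There, the queries and keys at the visual positions and at position $j$ are unchanged by the edit, since they are functions of the token at that position plus a positional encoding. Hence $\tilde Z_j-Z_j=\exp(\tilde s_{j,j'})-\exp(s_{j,j'})\neq 0$. Every visual entry then transforms as $\widetilde A_{j,t_i^v}=A_{j,t_i^v}\,Z_j/\tilde Z_j$. These entries are all strictly positive, because softmax weights are positive on unmasked positions. So $\widetilde{\mathbf{a}}(t_j)\neq\mathbf{a}(t_j)$.

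\textbf{Step 3 (pass to the map).} The reshape is a bijection. Bilinear upsampling is linear, and for a nonzero uniform scaling of a positive map it clearly gives a nonzero difference. So the dense map changes, which is \cref{eq:context_ineq}.

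For a deeper layer $l$, I argue at the level of representations. The hidden state $x_j^{(l)}$ contains the attention output $\sum_{r}A_{j,r}v_r$ from earlier layers. This output depends on $t_{j'}$ through the nonzero weight $A_{j,j'}$, so $q_j$ at layer $l$ is a non-constant function of $t_{j'}$. The step I expect to be the main obstacle is guaranteeing that this non-constancy actually propagates into the visual attention row, rather than cancelling out. Two cancellations are possible: a residual/MLP path could undo the change, or the change in $q_j$ could fall in the kernel of every visual key difference.

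My first attempt is an analyticity argument. The map from the input embedding of $t_{j'}$ to $\mathbf{a}^{(l,h)}(t_j)$ is real-analytic, being a composition of affine maps, softmax, and smooth activations. It is therefore either constant or differs at generic points. Since the statement only asserts existence over parameters and prefixes, it is enough to show it is not constant for some admissible weights. The first-layer construction above already gives such an instance, by taking all earlier layers to be identity via zeroed residual branches. Because aggregation over heads and layers is a positive combination, the difference survives aggregation when the chosen head dominates. I will note that this is the assumption being used, stating it explicitly as the reading of ``a head used by the attribution extractor''.

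Finally, the same Step 2 argument applies verbatim when $j'$ lies in the answer prefix $\mathcal{T}_a^{<j}$. This exhibits pathway (ii), the generated-answer priors, as a special case.
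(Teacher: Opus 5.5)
\textbf{Gap: the deep-layer case is proved for a different model.} Your Step 2 computation is correct. With $q_j$ and the visual keys untouched, $\widetilde{\mathbf a}=(Z_j/\tilde Z_j)\,\mathbf a$ with $Z_j/\tilde Z_j\neq 1$, so an extractor head in layer $1$ is handled. The gap is in reaching the layer that matters, since the extractor uses the last layer. The proposition fixes the Transformer and is existential only in the prefixes. Your reading that it ``only asserts existence over parameters and prefixes'' changes the quantifier. Zeroing the residual branches of earlier layers therefore proves the claim for a different model, not the given one.

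The analyticity remark does not bridge this. At fixed weights, analyticity in the embedding of $t_{j'}$ gives the dichotomy ``constant or generically non-constant'' only for that same model. Non-constancy at your constructed weights says nothing about the given weights unless you invoke joint analyticity in weights and embedding. Even then you get only an almost-every-weights statement, and you need genuinely analytic activations: smoothness is not enough, and ReLU MLPs break it.

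\textbf{What is missing: a non-degeneracy hypothesis.} The proposition needs one anyway. The condition ``attention to $j'$ can be non-zero'' is vacuous for softmax. A head with $W_Q^{(l,h)}=0$ satisfies it while returning the prefix-independent row $A_{j,r}=1/j$. Your Step 1 already uses such a condition tacitly (existence of $\tilde t_{j'}$ with $\tilde s_{j,j'}\neq s_{j,j'}$), and your dominant-head clause is another. Suppose a condition of this kind is stated at layer $l$ of the given model. For instance: some substitution moves $q_j$ non-orthogonally to a difference of two visual keys, or, as in your case, fixes $q_j$ but changes $Z_j$. Then no change of weights is needed. By causality, an edit at $j'>N_v$ leaves every visual key $k_i$ unchanged. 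So the visual rows coincide only if $(\tilde q_j-q_j)^\top k_i=\sqrt{d_k}\,\log(\tilde Z_j/Z_j)$ for every visual $i$, and either condition rules this out.

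\textbf{Comparison with the paper's route.} The paper argues through the step-$j$ representation. The nonzero weight $A_{j,j'}$ carries information about $t_{j'}$ into the residual stream at $j$, and hence into the query. This is the mechanism that can move the \emph{spatial} pattern, namely when $(\tilde q_j-q_j)^\top(k_i-k_{i'})\neq 0$ for some visual $i,i'$. Its cost is exactly the non-cancellation issue you flagged. Your Step 2 instead keeps $q_j$ fixed and only rescales the visual row. That satisfies the displayed inequality literally, but it has three limitations:
\begin{itemize}
\item it does not show that the representation at step $j$ depends on the prefix;
\item it disappears under min--max or sum normalisation of the map;
\item it leaves TGR, TDR and Mean-IoU unchanged, since all three are invariant to positive rescaling.
\end{itemize}
So your construction certifies a contamination that none of the paper's evaluations could detect.
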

Note: Full proof is provided in the supplementary material (Sec.~A.1).

\begin{remark}[When can contamination disappear?]
Equality in \cref{eq:context_ineq} would require that the attribution-relevant representation at position $j$
is effectively insensitive to all earlier tokens,
which is a highly restrictive condition and does not hold in standard autoregressive Transformers.
\end{remark}

\begin{remark}[Connection to PV-TAM]
Proposition~\ref{prop:context} motivates extracting attribution using \emph{prompt-side} query tokens.
Specifically, PV-TAM places target concepts as fixed prompt tokens and computes their token-to-vision activation maps
from representations that do not depend on the \emph{variable} generated prefix $\mathcal{T}_a^{<j}$.
This avoids the autoregressive contamination in \cref{eq:context_ineq} and yields more stable token--region alignment.
\end{remark}

\begin{figure}[htbp]
    \centering    \includegraphics[width=1.\linewidth]{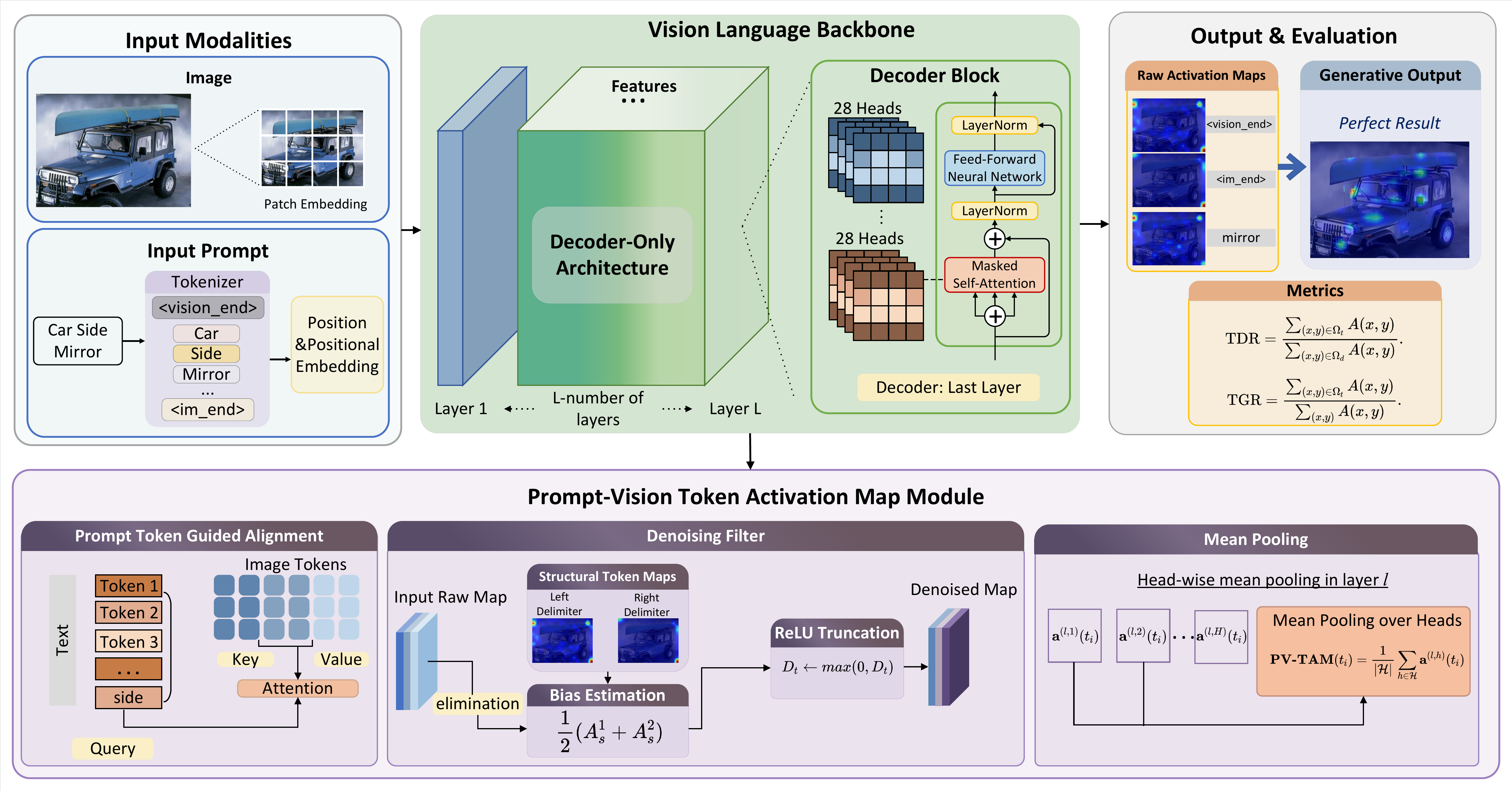}
    \caption{PV-TAM framework. It builds on a decoder-only backbone to perform alignment. The raw attention map is taken from the last-layer attention weights. We apply three components in refining attention map:(i) prompt-token guided alignment, using the prompt token as the query to attend to vision tokens and reduce semantic interference;(ii) a denoising filter to remove systematic bias introduced by structural tokens;(iii) head-wise mean pooling to aggregate attention maps.} 
    \label{fig:pvtam}
\end{figure}

\subsection{PV-TAM: Prompt-Vision Token Activation Map}
\subsubsection{PV-TAM Extraction.}
To reduce inconsistency, we use \textbf{prompt-side target tokens} rather than answer-side tokens. As shown in \Cref{fig:pvtam},
given a set of target tokens $\mathcal{T}_{\text{target}}=\{t_1,\ldots,t_m\}$, we construct an augmented sequence
\begin{equation}
\mathcal{T}_{\text{aug}}=\mathcal{T}_v \oplus \mathcal{T}_{\text{target}} \in \mathbb{R}^{(N_v+m)\times d},
\end{equation}

For each target token $t_i$, we extract its prompt-to-vision attention vector $\mathbf{a}^{(l,h)}(t_i)$ and aggregate across heads by mean pooling:
\begin{equation}
\mathbf{PV\text{-}TAM}^{(l)}(t_i)=\frac{1}{|\mathcal{H}|}\sum_{h\in\mathcal{H}}\mathbf{a}^{(l,h)}(t_i)\in\mathbb{R}^{N_v}.
\end{equation}

In general, we use the attention from the last transformer layer. 
The resulting vector is reshaped to $H'\times W'$ and upsampled to the image resolution.
\subsubsection{Structural Bias Elimination.}
Prompt-side attention can still contain spatial bias introduced by special structure tokens, like modality delimiters(e.g., \texttt{<vision\_start>}, \texttt{<vision\_end>}, \texttt{<im\_start>}, \texttt{<im\_end>}).
Let $\bm{A}^t$ denote the PV-TAM of a target token, then $\bm{A}_1^s,\bm{A}_2^s$ denote the PV-TAM maps of the two adjacent special tokens.
We estimate the structure bias by their mean and subtract it from the target response:
\begin{equation}
\bm{D}_t=\bm{A}^{t}-\frac{1}{2}\left(\bm{A}^{s}_{1}+\bm{A}^{s}_{2}\right),
\label{eq:decontam}
\end{equation}
followed by ReLU truncation to keep positive evidence:
\begin{equation}
\bm{D}_t \leftarrow \max(\bm{0},\,\bm{D}_t).
\label{eq:decontam_relu}
\end{equation}

\begin{assumption}[Additive decomposition]
\label{assump:additive}
For each visual position $i \in [N_v]$, the target token's attention
decomposes as
\begin{equation}
  A^t_i = A^{\mathrm{sig}}_i + A^{\mathrm{sp}}_i + \varepsilon^t_i,
\end{equation}
where $\bm{A}^{\mathrm{sig}} \geq \bm{0}$ is the semantic information,
$\bm{A}^{\mathrm{sp}} \geq \bm{0}$ is the structural bias shared with
neighboring special tokens, and $\bm{\varepsilon}^t$ is residual noise
with $\mathbb{E}[\varepsilon^t_i] = 0$.
\end{assumption}

\begin{assumption}[Weak semantic decomposition]
\label{assump:weak}
Special tokens carry negligible semantic
information, so their attention maps satisfy
\begin{equation}
  A^{s_k}_i = A^{\mathrm{sp}}_i + \delta^{(k)} , k \in \{1, 2\},
\end{equation}
where $\delta^{(k)}_i$ is a position-dependent deviation.
\end{assumption}


\begin{assumption}[Unbiased boundary deviations]
\label{assump:unbiased}
We assume the deviations
have zero mean under the randomness of previous contexts:
\begin{equation}
  \mathbb{E}\!\left[\delta^{(k)}_i \mid \mathcal{I},\mathcal{T}_v,\mathcal{T}_p\right] = 0,
  \quad \forall\, i\in[N_v],\; k\in\{1,2\}.
\end{equation}
\end{assumption}

\begin{proposition}[Structural Bias Elimination]
\label{prop:decontam}
Under Assumptions~\ref{assump:additive}--\ref{assump:unbiased}, 
the unbiased map satisfies
\begin{equation}
  \bm{D}_t
  \;:=\;
  \bm{A}^t - \tfrac{1}{2}\bigl(\bm{A}_1^s + \bm{A}_2^s\bigr)
  \;=\;
  \bm{A}^{\mathrm{sig}} + \bm{\varepsilon}^t,
\end{equation}
where $\bm{A}^{\mathrm{sig}}$ is the target's semantic signal and
$\bm{\varepsilon}^t$ is zero-mean residual noise.
\end{proposition}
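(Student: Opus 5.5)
The plan is to substitute the two modelling assumptions directly into the definition of $\bm{D}_t$, cancel the shared structural term, and then show that the deviations coming from the boundary tokens are absorbed into a zero-mean residual. The only real care needed is in how $\bm{\varepsilon}^t$ is read in the conclusion. A literal reading would require $\delta^{(1)}_i+\delta^{(2)}_i=0$ pointwise, and Assumption~\ref{assump:unbiased} does not give that. So I would prove the statement with a redefined residual that absorbs the boundary deviations and is still zero-mean, and say this explicitly.

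\textbf{Step 1 (substitution).} Fix a visual position $i\in[N_v]$. By Assumption~\ref{assump:additive},
$A^t_i = A^{\mathrm{sig}}_i + A^{\mathrm{sp}}_i + \varepsilon^t_i$.
By Assumption~\ref{assump:weak}, $A^{s_k}_i = A^{\mathrm{sp}}_i + \delta^{(k)}_i$ for $k=1,2$. Averaging over $k$ gives
$\tfrac12(A^{s_1}_i + A^{s_2}_i) = A^{\mathrm{sp}}_i + \bar\delta_i$, where $\bar\delta_i := \tfrac12(\delta^{(1)}_i+\delta^{(2)}_i)$.

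\textbf{Step 2 (cancellation).} Subtracting this average from $A^t_i$, the structural term $A^{\mathrm{sp}}_i$ appears in both expressions and cancels, so
$(\bm D_t)_i = A^{\mathrm{sig}}_i + \varepsilon^t_i - \bar\delta_i$.
The whole argument rests on this cancellation. It is valid only because both assumptions refer to the \emph{same} vector $\bm{A}^{\mathrm{sp}}$, which is what ``structural bias shared with neighboring special tokens'' means.

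\textbf{Step 3 (zero-mean residual).} Define the effective residual $\tilde\varepsilon^t_i := \varepsilon^t_i - \bar\delta_i$. Take expectations conditionally on $(\mathcal I,\mathcal T_v,\mathcal T_p)$. Assumption~\ref{assump:unbiased} gives $\mathbb E[\bar\delta_i\mid\cdot]=0$. For $\varepsilon^t_i$ we have $\mathbb E[\varepsilon^t_i]=0$ from Assumption~\ref{assump:additive}. I would interpret that expectation under the same conditioning, or else state the conclusion unconditionally using the tower property $\mathbb E[\bar\delta_i]=\mathbb E[\mathbb E[\bar\delta_i\mid\cdot]]=0$. Either way, linearity gives $\mathbb E[\tilde\varepsilon^t_i]=0$. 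Renaming $\tilde{\bm\varepsilon}^t$ as the zero-mean residual noise $\bm\varepsilon^t$ of the statement then yields $\bm D_t = \bm A^{\mathrm{sig}} + \bm\varepsilon^t$. As a corollary, $\mathbb E[\bm D_t]=\mathbb E[\bm A^{\mathrm{sig}}]$, so $\bm D_t$ is an unbiased estimator of the semantic signal.

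\textbf{Main obstacle.} The difficulty is interpretive rather than computational. I expect two points to need a short remark.
\begin{itemize}
\item The residual in the conclusion differs from the $\bm\varepsilon^t$ of Assumption~\ref{assump:additive} by $-\bar{\bm\delta}$. Its variance also grows, to at most $2\,\mathrm{Var}(\varepsilon^t_i)+2\,\mathrm{Var}(\bar\delta_i)$, so only unbiasedness is preserved, not exact equality.
\item The ReLU step in \cref{eq:decontam_relu} is \emph{not} covered by the proposition. Since $\max(0,\cdot)$ is convex, Jensen's inequality gives $\mathbb E[\max(\bm 0,\bm D_t)]\ge \max(\bm 0,\mathbb E[\bm D_t]) = \bm A^{\mathrm{sig}}$ (using $\bm A^{\mathrm{sig}}\ge\bm 0$). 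Truncation therefore introduces a small upward bias, and the unbiasedness claim should be stated only for the pre-truncation map.
\end{itemize}
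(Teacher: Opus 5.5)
Your proposal is correct and follows the paper's route. You substitute Assumptions~\ref{assump:additive} and~\ref{assump:weak}, cancel the shared structural term $\bm{A}^{\mathrm{sp}}$, and use Assumption~\ref{assump:unbiased} to absorb the averaged boundary deviation $\tfrac12\bigl(\bm{\delta}^{(1)}+\bm{\delta}^{(2)}\bigr)$ into a zero-mean residual. Your remark that the $\bm{\varepsilon}^t$ in the conclusion must be read as this redefined residual is the right reading, since the equality does not hold pointwise with the original $\bm{\varepsilon}^t$ of Assumption~\ref{assump:additive}.
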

Full proof is provided in the supplementary material (Sec.~A.2).

\begin{corollary}[$\ell_1$ error bound after ReLU truncation]
\label{cor:relu_bound}
Let $\hat{\bm{D}}_t = \max(\bm{0},\, \bm{D}_t)$ denote the ReLU-truncated
map ( \cref{eq:decontam_relu}).
Then
\begin{equation}
  \bigl\|\hat{\bm{D}}_t - \bm{A}^{\mathrm{sig}}\bigr\|_1
  \;\leq\;
  \|\bm{\varepsilon}^t\|_1,
\end{equation}
with equality if \/ $\bm{\varepsilon}^t = \bm{0}$.
\end{corollary}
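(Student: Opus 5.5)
The plan is to reduce the $\ell_1$ bound to a coordinatewise inequality and then sum over visual positions. First, I invoke Proposition~\ref{prop:decontam}, which gives the identity $\bm{D}_t = \bm{A}^{\mathrm{sig}} + \bm{\varepsilon}^t$, so that $D_{t,i} = A^{\mathrm{sig}}_i + \varepsilon^t_i$ for every $i \in [N_v]$. Here I read $\bm{\varepsilon}^t$ exactly as the residual that appears in that proposition. This matters because, strictly speaking, the subtraction also leaves the boundary deviations $-\tfrac12(\delta^{(1)}_i+\delta^{(2)}_i)$. The proposition's statement absorbs these into the zero-mean residual, and I will keep the same convention so that the corollary is a pure consequence of the stated identity. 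Second, I use the fact from Assumption~\ref{assump:additive} that $\bm{A}^{\mathrm{sig}} \geq \bm{0}$, which gives $A^{\mathrm{sig}}_i = \max(0, A^{\mathrm{sig}}_i)$ for every $i$.

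The key step is that the scalar map $x \mapsto \max(0,x)$ is $1$-Lipschitz: $|\max(0,x) - \max(0,y)| \le |x-y|$ for all real $x,y$. I will verify this quickly by cases on the signs of $x$ and $y$.
\begin{itemize}
\item If both are nonnegative, the two sides coincide.
\item If both are negative, the left side is $0$.
\item If $x \ge 0 > y$, the left side is $x \le x - y$; the case $y \ge 0 > x$ is symmetric.
\end{itemize}
Applying this with $x = D_{t,i}$ and $y = A^{\mathrm{sig}}_i$, and using the fixed-point property from the first step, gives
\[
|\hat D_{t,i} - A^{\mathrm{sig}}_i| = |\max(0,D_{t,i}) - \max(0,A^{\mathrm{sig}}_i)| \le |D_{t,i} - A^{\mathrm{sig}}_i| = |\varepsilon^t_i|.
\]
Summing over $i \in [N_v]$ yields $\|\hat{\bm{D}}_t - \bm{A}^{\mathrm{sig}}\|_1 \le \|\bm{\varepsilon}^t\|_1$.

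For the equality clause, if $\bm{\varepsilon}^t = \bm{0}$, then $\bm{D}_t = \bm{A}^{\mathrm{sig}} \ge \bm{0}$. Hence $\hat{\bm{D}}_t = \bm{A}^{\mathrm{sig}}$, and both sides vanish. I would also remark that equality holds more generally whenever every coordinate with $\varepsilon^t_i < 0$ satisfies $D_{t,i} \ge 0$. No real obstacle remains beyond the convention on what $\bm{\varepsilon}^t$ contains, which is inherited from Proposition~\ref{prop:decontam}. The only point needing care is the Lipschitz case split, specifically the case where ReLU truncates a negative $D_{t,i}$ against a nonnegative $A^{\mathrm{sig}}_i$.
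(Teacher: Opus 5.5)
Your proposal is correct and follows essentially the same route as the paper's proof: substitute $\bm{D}_t = \bm{A}^{\mathrm{sig}} + \bm{\varepsilon}^t$ from Proposition~\ref{prop:decontam}, use that $\bm{A}^{\mathrm{sig}} \geq \bm{0}$ is fixed by ReLU, and apply the $1$-Lipschitz property of ReLU coordinatewise before summing. Your explicit case split, your sharper equality characterization, and your caveat that the $\delta^{(k)}$ terms must be absorbed into $\bm{\varepsilon}^t$ for the proposition's identity to hold are all accurate, and they make explicit what the paper's one-line argument leaves implicit.
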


\begin{proof}
Since $\bm{A}^{\mathrm{sig}} \geq \bm{0}$ and ReLU is $1$-Lipschitz,
\begin{equation}
  \bigl\|\hat{\bm{D}}_t - \bm{A}^{\mathrm{sig}}\bigr\|_1
  = \bigl\|\max(\bm{0},\,\bm{A}^{\mathrm{sig}} + \bm{\varepsilon}^t)
           - \bm{A}^{\mathrm{sig}}\bigr\|_1
  \leq \|\bm{\varepsilon}^t\|_1. \qed
\end{equation}
\end{proof}

Therefore, using only one flanking special token introduces a systematic error of $\delta^{(k)}_i$ per position, while bilateral mean cancels this error.
\subsubsection{Foreground-Guided Refinement.}
\label{subsubsec:fore}
Even after denoising, salient but irrelevant background regions may dominate the response.
We optionally apply a foreground probability mask $\bm{F}\in[0,1]^{H\times W}$ obtained from an off-the-shelf background removal model (rembg) and refine the activation by
\begin{equation}
\bm{A}^{\text{refined}}_t = \bm{D}_t \odot \bm{F},
\end{equation}
where $\odot$ is the element-wise product.
We report results both with and without this refinement to separate the core attribution signal from external mask priors.

\subsection{Evaluation for Prompt-Token Alignment}
\label{sec:metrics}
To quantitatively assess the alignment between an activation map and the target semantic part, we introduce three complementary metrics to capture the reliability of token-to-region alignment: (i) the fraction of attention weights concentrated inside the target region, (ii) whether the dominant attention corresponds to the target region, and (iii) the geometric distance between salient peaks and the target center. In contrast to IoU-style metrics that depend solely on overlap between binarized masks, our metrics directly measure the \emph{attention distribution} and \emph{salient responses} relative to the target, making them more sensitive to failure modes such as attention drift, background-induced saliency, and multi-peak activations.

Let $A \in \mathbb{R}^{H\times W}$ denote the PV-TAM of target token, and $\Omega_t \subseteq \{1,\dots,H\} \times \{1,\dots,W\}$ denote ground-truth mask. For PartImageNet \cite{he2022partimagenet}, we obtain the binary activation mask $\Omega_d$ of the foreground object using rembg~\cite{gatis2022rembg}, a well-maintained background removal tool based on U$^2$-Net~\cite{qin2020u2net}. 
We discard samples where rembg fails to cover the ground-truth mask, ensuring $\Omega_d \cap \Omega_t = \Omega_t$. Since RefCOCO\cite{refcoco} requires global information for semantic understanding, we do not apply foreground-guided refinement in \cref{subsubsec:fore}.
\subsubsection{\textbf{Target--Global Ratio (TGR)}}
TGR measures the proportion of total activation energy that falls inside the target-part region.
A higher TGR indicates that the explanation is globally concentrated on the intended target.
\begin{equation}
\mathrm{TGR} =
\frac{\sum_{(x,y)\in\Omega_t} A(x,y)}
     {\sum_{(x,y)} A(x,y)}.
\end{equation}
\subsubsection{\textbf{Target--Dominant Ratio (TDR)}}



Let $\Omega_d$ denote the dominant region extracted by rembg~\cite{gatis2022rembg}. It is defined as:
\begin{equation}
\mathrm{TDR} = \frac{\sum_{(x,y)\in\Omega_t} A(x,y)}{\sum_{(x,y)\in\Omega_d} A(x,y)}.
\end{equation}
By restricting the evaluation to the foreground mask, attention can more effectively concentrate on target-related regions within the object, free from interference by background noise.


\subsubsection{\textbf{Min Distance (Min-Dist)}}

We extract a set of candidate saliency peaks $\{p_{k}\}_{k=1}^{K}$ from one PV-TAM. Let $\mathcal{M}$ denote the target mask representing the ground-truth part region. To eliminate resolution mismatch, we compute distances in normalized coordinates $[0,1]\times[0,1]$. The distance from peak $p_k$ to the mask $\mathcal{M}$ is defined as the minimum distance to all pixels in the mask:
\begin{equation}
d_{k}=\min_{m \in \mathcal{M}} \sqrt{\frac{(p_{k}[x]-m[x])^2}{W^2}+\frac{(p_{k}[y]-m[y])^2}{H^2}},
\end{equation}
where $W,H$ are the image width and height. For sample $s$, the minimum distance is obtained as:
\begin{equation}
\hat{d}_s=\min_{k} d_{k}.
\end{equation}
In our experiments, the maximum value of $K$ is set to 5. Nevertheless, scenarios where $K < 5$ are encountered in practice. In the extreme case where $K=0$ (i.e., no peaks detected), we assign $\hat{d}_s=1$ as the worst-case value. The dataset-level MinDist is formulated as:
\begin{equation}
\mathrm{MinDist}=\frac{1}{N}\sum_{s=1}^{N}\hat{d}_s.
\end{equation}
Lower MinDist indicates better alignment between salient peak and the true center.
\subsubsection{\textbf{Obj-IoU and Mean-IoU.}}
Following the format of Object-IoU in TAM\cite{li2025tam}, we compute the intersection-over-union (IoU) between the binary activation map and the ground truth object mask. Unlike the Otsu \cite{otsu1975threshold} thresholding method used in TAM, we apply spatial mean thresholding to each activation map and use the resulting IoU value as \textbf{Mean-IoU}.

\section{Experiments}
\subsection{Setup}
We use multiple open-source VLMs, including Qwen2-VL\cite{wang2024qwen2vl}, Qwen2.5-VL\cite{bai2025qwen25vl} and InternVL\cite{chen2024internvl}. 
For answer-side baselines, we standardize the prompt as
\texttt{"Focus on the target in the image. Note that only the word `target' can be returned."}
For baselines that require binarization, we follow the original implementation and apply Otsu\cite{otsu1975threshold} thresholding to obtain binary masks.
\subsection{Datasets}

\textbf{PartImageNet}\cite{he2022partimagenet} is a part-segmentation benchmark for fine-grained region alignment and a referring-expression benchmark for language-conditioned localization.
 It provides pixel-level part annotations for object categories, enabling quantitative evaluation of whether a token-to-visual activation map aligns with the target part region. 
\textbf{RefCOCO}\cite{refcoco} is a referring expression comprehension dataset where each sample consists of an image, a natural language referring expression, and a ground-truth bounding box for the referred object. We utilize RefCOCO to investigate whether object semantics incorporating adjectival modifiers can be aligned.

\subsection{Quantitative analysis}
We report a quantitative evaluation of prompt-to-region consistency across multiple models and datasets. To encourage stronger consistency in answer-side attention, we compare our approach with state-of-the-art interpretability methods, including TAM\cite{li2025tam}, Grad-CAM\cite{selvaraju2017grad}, CP-LRP\cite{ali2022xai}, and Attention Rollout\cite{abnar2020quantifying}. 

Overall, Table~\ref{tab:main} presents results under Qwen2.5-VL-7B\cite{bai2025qwen25vl}. Our method achieves the best TGR/TDR and the lowest MinDist, indicating that attention is both more concentrated on the target region and more accurately localized in terms of peak position.
\begin{table}[tb]
\centering
\caption{Comparison of part-aware explanation quality on PartImageNet-OOD using Qwen2-VL-7B model\cite{wang2024qwen2vl}. $\uparrow$ indicates higher is better. $^{\#}$ denotes results with emphasis on the target token. We use the attention from the last transformer layer, but for Attention Rollout\cite{ali2022xai}, we use the last three layers. }
\label{tab:main}
\scriptsize
\begin{tabularx}{\linewidth}{Xcccc}
\toprule
Method & TGR$\uparrow( \times 10^{2})$ & TDR$\uparrow( \times 10^{2})$  & Min Dist$\downarrow(\times 10^{-1})$ \\
\midrule
CAM~\cite{zhou2016learning}                   & 7.03 & 5.25 & 0.77 \\
CAM$^{\#}$ ~\cite{zhou2016learning}          & 8.47 & 6.2  & 0.72 \\
CP-LRP~\cite{ali2022xai}               & 4.12  & 6.02 & 0.71 
\\
CP-LRP$^{\#}$ \cite{ali2022xai}        & 3.63 & 6.02 & 0.71 
\\
Attention Rollout~\cite{abnar2020quantifying} & 2.46 & 3.09 & 1.62 \\
TAM\cite{li2025tam}                           & 6.63 & 5.21 & 1.25 \\
TAM$^{\#}$ \cite{li2025tam}                  & 8.29 & 6.36 & 1.10 \\
\midrule
Ours(Prompt-side) & \textbf{9.67}  & \textbf{7.26}  & \textbf{0.65}  \\
\bottomrule
\end{tabularx}
\end{table}
\Cref{tab:qwens} reports the cross-model results covering both the Qwen series and InternVL models. Our method achieves strong performance on medium-scale models such as Qwen2-VL-7B and Qwen2.5-VL-7B, with the best results obtained by Qwen2.5-VL-7B. Notably, the proposed method also demonstrates clear advantages on smaller models, outperforming baseline on InternVL3-1B.
\begin{table}[tb]
\centering
\caption{Cross-model generalization under Qwen and InternVL in PartImageNet-OOD.}
\label{tab:qwens}
\scriptsize
\setlength{\tabcolsep}{0pt}
\begin{tabular*}{\linewidth}{@{\extracolsep{\fill}}lcccccc}
\toprule
\multirow{2}{*}{Backbone} & \multicolumn{2}{c}{TGR$\uparrow( \times 10^{2})$} & \multicolumn{2}{c}{TDR$\uparrow( \times 10^{2})$} & \multicolumn{2}{c}{Min Dist$\downarrow(\times 10^{-1})$} \\
\cmidrule(lr){2-3} \cmidrule(lr){4-5} \cmidrule(lr){6-7}
& Ours & TAM$^{\#}$ & Ours & TAM$^{\#}$ & Ours & TAM$^{\#}$ \\
\midrule
Qwen2-VL-2B\cite{wang2024qwen2vl}    & $4.87$ & $\textbf{8.13}$ & $4.08 $ & $\textbf{5.21} $ & $\textbf{0.91} $ & $1.24 $ \\
Qwen2-VL-7B\cite{wang2024qwen2vl}   & $\textbf{9.67}$ & $8.29$ & $\textbf{7.26}$ & $6.36$ & $\textbf{0.65}$ & $1.10 $ \\
Qwen2.5-VL-3B\cite{bai2025qwen25vl} & $4.21$ & $\textbf{5.02}$ & $2.96 $ & $\textbf{3.23} $ & $\textbf{1.12}$ & $1.26 $ \\
Qwen2.5-VL-7B\cite{bai2025qwen25vl} & $\textbf{7.60}$ & $5.34$ & $\textbf{6.08}$ & $5.26 $ & $\textbf{0.92}$ & $1.04 $ \\
InternVL3-1B\cite{zhu2025internvl3exploringadvancedtraining} & $\textbf{3.42}$ & $2.36$ & $\textbf{5.70}$ & $3.34$ & $\textbf{0.81}$ & $1.47$ \\
\bottomrule
\end{tabular*}
\end{table}

To provide a more fine-grained analysis across different semantic categories, \Cref{tab:perclass} reports the Obj-IoU and win-rate comparisons. Overall, our method achieves performance comparable to the SOTA answer-side interpretability approach TAM\cite{li2025tam} (Baseline). Specifically, on Qwen2-VL-7B\cite{wang2024qwen2vl}, our method shows stronger attention to fine-grained semantic parts such as bicycle head, aeroplane head, aeroplane tail, and bottle mouth, indicating improved localization of semantically relevant regions under distribution shift.

\begin{table}[htbp]
\centering
\caption{Class-level comparison on PartImageNet-OOD. We report the mean Object-level IoU (Obj-IoU) used in TAM\cite{li2025tam}. Win rate is percentage of samples where our method outperforms baseline.}
\label{tab:perclass}
\scriptsize
\setlength{\tabcolsep}{0pt}
\begin{tabular*}{\linewidth}{@{\extracolsep{\fill}}lcccccc}
\toprule
\multirow{2}{*}{Class} & \multicolumn{3}{c}{Qwen2-VL-7B\cite{wang2024qwen2vl}} & \multicolumn{3}{c}{Qwen2.5-VL-7B\cite{bai2025qwen25vl}} \\
\cmidrule(lr){2-4} \cmidrule(lr){5-7}
 & Baseline& Ours& Win-rate$\uparrow$ & Baseline & Ours& Win-rate$\uparrow$ \\
\midrule
Car side mirror   & \textbf{2.02 } & 1.99  & \textbf{57.62} & 1.98  & \textbf{5.05}  & \textbf{68.04}  \\
Bicycle head      & 2.14  & \textbf{3.11}  & \textbf{81.58} & \textbf{3.59}  & 1.13  & \textbf{51.79}  \\
Bicycle seat      & \textbf{5.58}  & 4.99  & \textbf{58.08} & \textbf{6.40}  & 1.42  & 27.94  \\
Boat sail         & \textbf{25.37} & 17.72 & 34.19 & \textbf{24.86} & 14.01 & 23.22  \\
Aeroplane head    & 18.97 & \textbf{26.56} & \textbf{68.66} & \textbf{19.00} & 13.46 & 41.26   \\
Aeroplane engine  & \textbf{12.88} & 11.50 & 44.20 & \textbf{11.47} & 5.93  & 26.15  \\
Aeroplane wing    & \textbf{10.36} & 9.00  & 44.92 & \textbf{9.51}  & 8.14  & \textbf{50.87}  \\
Aeroplane tail    & 14.81 & \textbf{20.35} & \textbf{64.00} & 8.00  & \textbf{13.79} & \textbf{71.64}  \\
Bottle mouth      & 4.01  & \textbf{15.55} & \textbf{87.72} & \textbf{13.39} & 10.55 & 48.28  \\
Bottle body       & 18.37 & \textbf{25.32} & \textbf{61.90} & \textbf{26.15} & 22.58 & 41.18  \\
\midrule
Total             & 10.13 & \textbf{10.26} &\textbf{ 57.20} & \textbf{10.53} & 7.86  & 46.81 \\
\bottomrule
\end{tabular*}
\end{table}

For the prompt-side approach proposed in this paper, we systematically compared prompt-side and answer-side methods in \Cref{tab:token_source1} and \Cref{tab:token_source}. Experimental results demonstrate that on the Qwen-VL-7B\cite{wang2024qwen2vl}, the prompt-side method exhibits significant advantages, with overall performance superior to various answer-side methods. On the PartImageNet\cite{he2022partimagenet} and RefCOCO\cite{refcoco} datasets, this method achieves optimal results on both attention intensity metrics and IoU-style metrics, indicating that the method not only accurately focuses on target semantic regions but also exhibits higher attention. These results further validate that autoregressive answer tokens are prone to contextual drift, whereas prompt-side strategies provide more stable visual-language alignment.


\begin{table}[tb]
\centering
\caption{\textbf{Controlled study on token sources.} Most baselines define token-to-visual maps on \emph{answer-side} tokens, thus prompt-side maps are not applicable. Our method supports both prompt-side PV-TAM and answer-side extraction under identical evaluation on datasets \textbf{PartImageNet}. Prompt-side results are not applicable for baselines due to definition mismatch. \textsuperscript{$\dagger$}TGR, TDR are reported in units of $10^2$, and MinDist in $10^{-1}$.$^{\#}$ represents last token.}
\label{tab:token_source1}
\scriptsize
\setlength{\tabcolsep}{0pt}
\begin{tabular*}{\linewidth}{@{\extracolsep{\fill}}lllccccc}
\toprule
Backbone & Method & Source & TGR$\uparrow$ & TDR$\uparrow$ & Obj-IOU$\uparrow$ & MeanIOU$\uparrow$ & MinDist$\downarrow$ \\
\midrule
\multirow{9}{*}{Qwen2-VL-2B\cite{wang2024qwen2vl}}
& \textbf{Ours}     & prompt        & 4.87 & 4.08 & 8.23 & 8.48 & \textbf{0.91} \\
& Ours              & answer        & 4.19 & 4.76 & 5.46 & 7.97 & 1.42 \\
& TAM\cite{li2025tam}               & answer        & 5.50 & 3.54 & 7.50 & 8.18 & 1.63 \\
& TAM\cite{li2025tam}                 & answer$^{\#}$ & \textbf{8.13} & \textbf{5.21} & \textbf{10.11}& \textbf{9.29} & 1.24 \\
& CAM\cite{jiang2021layercam}                & answer        & 4.28 & 3.47 & 7.37 & 8.05 & 0.98 \\
& CAM\cite{jiang2021layercam}               & answer$^{\#}$ & 5.44 & 4.00 & 7.77 & 8.15 & 0.95 \\
& Attention Rollout\cite{abnar2020quantifying} & answer        & 1.46 & 2.47 & 6.08 & 7.79 & 1.02 \\
& CP-LRP\cite{ali2022xai}            & answer        & 1.50 & 3.16 & 4.08 & 7.26 & 1.20 \\
& CP-LRP\cite{ali2022xai}               & answer$^{\#}$ & 1.61 & 3.19 & 3.56 & 6.77 & 1.14 \\
\midrule
\multirow{9}{*}{Qwen2-VL-7B\cite{wang2024qwen2vl}}
& \textbf{Ours}     & prompt        & \textbf{9.67} & \textbf{7.26} & \textbf{10.26}& \textbf{9.82} &\textbf{ 0.65} \\
& Ours              & answer        & 4.96 & 4.77 & 7.41 & 8.56 & 0.97 \\
& TAM\cite{li2025tam}                 & answer        & 6.63 & 5.21 & 7.11 & 7.25 & 1.25 \\
& TAM\cite{li2025tam}                 & answer$^{\#}$ & 8.29 & 6.36 & 10.13& 8.95 & 1.10 \\
& CAM\cite{jiang2021layercam}                & answer        & 7.03 & 5.25 & 7.80 & 8.12 & 0.77 \\
& CAM\cite{jiang2021layercam}                & answer$^{\#}$ & 8.47 & 6.20 & 8.61 & 8.38 & 0.72 \\
& Attention Rollout\cite{abnar2020quantifying} & answer        & 2.46 & 3.09 & 8.02 & 8.52 & 1.62 \\
& CP-LRP\cite{ali2022xai}               & answer        & 4.12 & 4.61 & 6.60 & 8.16 & 0.71 \\
& CP-LRP\cite{ali2022xai}               & answer$^{\#}$ & 3.63 & 4.39 & 6.49 & 8.14 & 0.71 \\
\midrule
\multirow{9}{*}{Qwen2.5-VL-7B\cite{bai2025qwen25vl}}
& \textbf{Ours  }   & prompt        & 7.60 &\textbf{ 6.08} & 7.86 & 8.36 & 0.92 \\
& Ours              & answer        & \textbf{8.42} & 5.71 & 5.23 & 6.97 & 1.18 \\
& TAM\cite{li2025tam}                 & answer        & 5.78 & 4.06 & 7.49 & 7.71 & 1.22 \\
& TAM\cite{li2025tam}                 & answer$^{\#}$ & 5.34 & 5.26 & \textbf{10.51}&\textbf{ 9.17} & 1.04 \\
& CAM\cite{jiang2021layercam}                & answer        & 3.97 & 5.06 & 8.17 & 8.17 & 1.10 \\
& CAM\cite{jiang2021layercam}                & answer$^{\#}$ & 4.67 & 4.35 & 8.84 & 8.33 & 0.89 \\
& Attention Rollout\cite{abnar2020quantifying} & answer        & 2.80 & 3.03 & 8.11 & 8.26 & 1.61 \\
& CP-LRP\cite{ali2022xai}               & answer        & 5.11 & 4.42 & 6.55 & 7.69 & 0.87 \\
& CP-LRP\cite{ali2022xai}               & answer$^{\#}$ & 5.81 & 4.51 & 6.50 & 7.72 & \textbf{0.81 }\\
\bottomrule
\end{tabular*}
\\[6pt]
\footnotesize

\end{table}

\begin{table}[tb]
\centering
\caption{\textbf{Controlled study on token sources.} Most baselines define token-to-visual maps on \emph{answer-side} tokens, thus prompt-side maps are not applicable. Our method supports both prompt-side PV-TAM and answer-side extraction under identical evaluation on datasets \textbf{RefCOCO}. \textsuperscript{$\dagger$}TGR and MinDist are reported in units of $10^2$ and $10^{-1}$, respectively.}
\label{tab:token_source}
\scriptsize
\setlength{\tabcolsep}{0pt}
\begin{tabular*}{\linewidth}{@{\extracolsep{\fill}}lllcccc}
\toprule
Backbone & Method & Source &  TGR$\uparrow$ & Obj-IOU$\uparrow$ & Mean-IOU$\uparrow$ & MinDist$\downarrow$ \\
\midrule
\multirow{7}{*}{Qwen2-7B}
& \textbf{Ours}     & \textbf{prompt}        & \textbf{15.77}& \textbf{14.27}&\textbf{22.42}&\textbf{ 0.23}\\
& Ours              & answer        & 4.39 & 2.92 &10.51& 1.97\\
& CAM\cite{jiang2021layercam}                & answer        & 4.64 & 9.05 &10.85& 0.56\\
& CAM\cite{jiang2021layercam}                & answer$^{\#}$ & 5.23 & 11.58&12.58& 0.55\\
& Attention Rollout\cite{abnar2020quantifying} & answer        & 5.96 & 5.54 &13.42& 0.77\\
& CP-LRP\cite{ali2022xai}               & answer        & 5.22 & 6.57 &11.57& 0.64\\
& CP-LRP\cite{ali2022xai}               & answer$^{\#}$ & 6.07 & 8.43 &12.39& 0.41\\
\midrule
\multirow{7}{*}{Qwen2.5-7B}
& \textbf{Ours }    & \textbf{prompt}        & \textbf{8.98} & 9.47 &\textbf{16.78}&\textbf{ 0.46}\\
& Ours              & answer        & 5.84 & 4.41 & 9.06&1.09\\
& CAM\cite{jiang2021layercam}                & answer        & 3.10 & 8.88 & 9.28& 1.10\\
& CAM\cite{jiang2021layercam}                & answer$^{\#}$ & 3.48 & \textbf{10.20}&10.45& 0.93\\
& Attention Rollout\cite{abnar2020quantifying} & answer        & 6.15 & 7.43 &12.70& 0.53\\
& CP-LRP\cite{ali2022xai}               & answer        & 4.62 & 5.49 &10.67& 0.82\\
& CP-LRP\cite{ali2022xai}               & answer$^{\#}$ & 4.91 & 6.38 &11.22& 0.70\\
\bottomrule
\end{tabular*}
\end{table}

In the ablation study (\Cref{tab:ablation}), we analyze the contribution of each module. 
The results show that both the prompt-side semantic module and the denoising module are indispensable for achieving attention-aligned PVTAM and obtaining strong performance.
\begin{table}[tb]
\centering
\scriptsize
\caption{\textbf{Ablation study} on PartImageNet-OOD and RefCOCO using Qwen-2.5-VL-7B. We isolate each component's contribution by removing or replacing one factor at a time. \texttt{w/o Filter} denotes removing the denoising module. \texttt{w/o Prompt-side} denotes using the generated answer-side as text input.}
\label{tab:ablation}
\setlength{\tabcolsep}{0pt}
\begin{tabular*}{\linewidth}{@{\extracolsep{\fill}}lccccccc}
\toprule
 & \multicolumn{3}{c}{\textbf{PartImageNet\cite{he2022partimagenet}}} & \multicolumn{4}{c}{\textbf{RefCOCO\cite{refcoco}}} \\
\cmidrule(lr){2-4} \cmidrule(lr){5-8}
 & \textbf{TGR}$\uparrow$ & \textbf{TDR}$\uparrow$ & \textbf{MinDist}$\downarrow$ & \textbf{TGR}$\uparrow$ & \textbf{Obj-IOU}$\uparrow$ & \textbf{Mean-IOU}$\uparrow$ &\textbf{MinDist}$\downarrow$ \\
\midrule
All Modules & 7.60 & \textbf{6.08} & \textbf{0.92} & \textbf{8.98} & \textbf{9.47} & \textbf{16.78} & \textbf{0.46} \\
w/o Filter & 4.27 & 2.58 & 1.18 & 3.30 & 1.42 & 10.57 & 1.85 \\
w/o Prompt-side & \textbf{8.42} & 5.71 & 1.18 & 5.84 & 4.41& 9.06  & 1.09 \\
\bottomrule
\end{tabular*}
\end{table}

\subsection{Qualitative analysis}

To provide a more intuitive analysis of PV-TAM's attention behavior, \Cref{fig:examples} presents visualization results from PartImageNet\cite{he2022partimagenet} and RefCOCO\cite{refcoco}. The first row of examples, sourced from PartImageNet\cite{he2022partimagenet}, analyzes the model's ability to focus on detailed regions of objects. Comparing against the ground-truth mask reveals that PV-TAM achieves higher localization accuracy in detail regions, capturing object structural boundaries more precisely.

The subsequent three examples are from the RefCOCO\cite{refcoco}. In complex scenes containing multiple objects, PV-TAM accurately localizes the target subject, whereas TAM exhibits noticeable attention diffusion. While TAM partially focuses on the target entity, inaccuracies persist in boundary regions. In contrast, CAM's attention distribution is relatively dispersed, and other methods largely fail to effectively focus on the target area.

The final two \textbf{Dish} examples demonstrate attention performance in the presence of similar distracting objects. Results indicate that TAM\cite{li2025tam} struggles to distinguish similar targets based on attributive information, whereas PV-TAM accurately identifies the target object and maintains clear boundary localization.

\begin{figure}[tb]
\centering
\includegraphics[width=1.\linewidth]{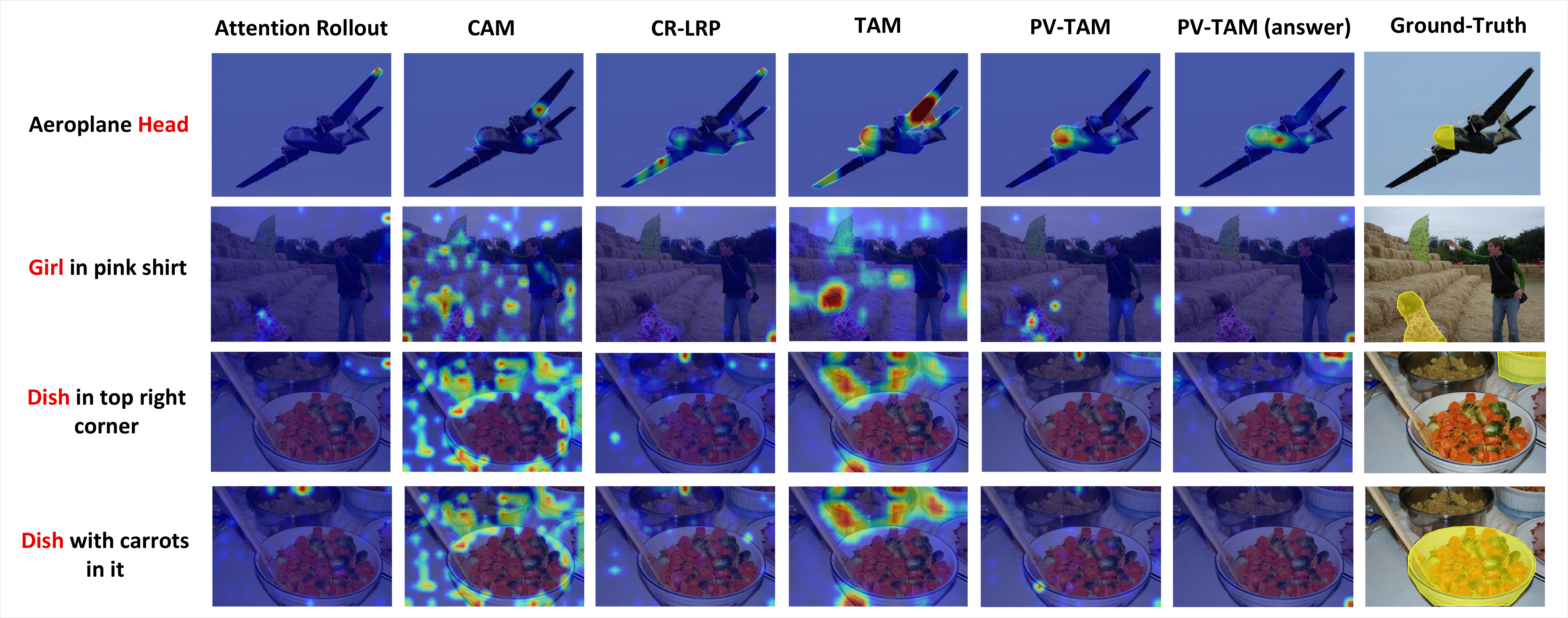}
    \caption{Visual comparison between PV-TAM and baselines. The red highlights the target token. The expected high level of attention indicated by these red areas should reside in the semantic region corresponding to that token; the more precise the better.} 
    \label{fig:examples}
\end{figure}
\section{Discussion}
It is noteworthy that the technical reports for Qwen2.5-VL\cite{bai2025qwen25vl} and InternVL\cite{chen2024internvl} did not explicitly treat token-level alignment as a training objective. However, the results in \Cref{tab:token_source} indicate that \emph{prompt-side} token extraction consistently generates stronger local attention compared to \emph{answer-side}. This phenomenon suggests that fine-grained word-region alignment may naturally emerge as a byproduct during the training of VLMs.

We hypothesize that this arises from the functional constraints imposed by the prompt-following generation process: to produce correct, visually grounded outputs, the model must associate component-relevant semantic units in the prompt with local visual information. This establishes stable attention patterns even without token-level alignment losses during training.

\textbf{Implications for Knowledge Distillation, Training.}
Prompt-side explanations can serve as lightweight grounding signals for knowledge distillation: activation maps function as soft pseudo-masks, providing component labels and visual region alignment supervision for smaller models. Additionally, the proposed metrics can be leveraged to introduce alignment-aware regularization constraints during training, further enhancing visual-language alignment capabilities.

\textbf{Limitations.}
Attention alignment does not necessarily imply causal explanation, and the gains are category-dependent (Tables~\ref{tab:perclass}). Developing class-aware prompting and adaptive filtering to mitigate failure cases is an important direction.\\
\section{Conclusion}
This work revisits language-vision alignment in VLMs from prompt-side semantics. We identify that existing methods for extracting activation maps can suffer from decoding drift, undermining alignment when relying on generated answer tokens.

To address this drift and improve alignment, we propose PV-TAM to leverage prompt tokens as language inputs. 
We further introduce structural denoising filter for subtracting spatial bias. Ultimately, special metrics (TGR, TDR, MinDist) are designed to precisely quantify attention alignment.

Experiments on PartImageNet and RefCOCO demonstrate that PV-TAM significantly outperforms existing baselines in part-level attention, with consistent gains across diverse models (Qwen2-VL, Qwen2.5-VL, InternVL). Notably, prompt-side results always surpass answer-side under Qwen2-VL-7B \cite{wang2024qwen2vl}, suggesting that alignment-relevant attention patterns emerge during prompt encoding rather than answer generation.

We hope this work motivates future research into prompt-centric explanation mechanisms and their applications in model debugging, distillation, and trustworthy multimodal systems.

\clearpage
\section*{Acknowledgements}
Please insert your acknowledgments here.

%
%
\bibliographystyle{splncs04}
\bibliography{main}
\end{document}